\pgfplotsset{compat=1.7}
\tikzset{
    axis break gap/.initial=0mm
}
\theoremstyle{plain}
\theoremstyle{definition}
\theoremstyle{remark}
\DeclareFontFamily{U}{mathx}{\hyphenchar\font45}
\DeclareFontShape{U}{mathx}{m}{n}{%
<-6> mathx5
<6-7> mathx6
<7-8> mathx7
<8-9> mathx8
<9-10> mathx9
<10-12> mathx10
<12-> mathx12
}{}
\DeclareSymbolFont{mathx}{U}{mathx}{m}{n}
\DeclareMathSymbol{\bigovoid}{\mathop}{mathx}{"EC}
\newcommand{\bigO}{\mathop{\stackinset{c}{-1.5pt}{c}{}{ \scalebox{0.8}{$\bigovoid$}}{\scalebox{0.8}{$\bigovoid$}}}}
\newcommand{\Conv}{%
  \mathop{\!\scalebox{2}{\raisebox{-0.2ex}{$\circledast$}}
  }
}
\newcommand{\conv}{%
  \mathop{\scalebox{1.5}{\raisebox{-0.1ex}{$\circledast$}}
  }
}
\newcommand{\sblue}[1]{\textcolor{blue}{\bm{#1}}}
\definecolor{gr}{RGB}{60,180,100}
\definecolor{bl}{RGB}{70,70,240}
\definecolor{sky}{RGB}{100,180,240}
\definecolor{yl}{RGB}{250,170,30}
\definecolor{or}{RGB}{200,140,80}
\definecolor{pp}{RGB}{200,150,240}
\definecolor{darkred}{RGB}{200,30,0}
\def\argmax{ \mathop{{\rm argmax}}}
\icmltitlerunning{LayerMerge: Neural Network Depth Compression through Layer Pruning and Merging}
\pgfplotsset{compat=1.18}
\begin{document}

\twocolumn[
\icmltitle{LayerMerge: Neural Network Depth Compression \\through Layer Pruning and Merging}



\icmlsetsymbol{equal}{*}

\begin{icmlauthorlist}
\icmlauthor{Jinuk Kim}{snu,nprc}
\icmlauthor{Marwa El Halabi}{samsung} 
\icmlauthor{Mingi Ji}{google}
\icmlauthor{Hyun Oh Song}{snu,nprc}
\end{icmlauthorlist}

\icmlaffiliation{snu}{
Department of Computer Science and Engineering, Seoul National University}
\icmlaffiliation{samsung}{
Samsung - SAIT AI Lab, Montreal}
\icmlaffiliation{google}{
Google}
\icmlaffiliation{nprc}{
Neural Processing Research Center}

\icmlcorrespondingauthor{Hyun Oh Song}{hyunoh@snu.ac.kr} 
\icmlkeywords{Machine Learning, ICML}

\vskip 0.3in
]



\printAffiliationsAndNotice{}  

\begin{abstract}
Recent works show that reducing the number of layers in a convolutional neural network can enhance efficiency while maintaining the performance of the network.
Existing depth compression methods remove redundant non-linear activation functions and merge the consecutive convolution layers into a single layer.
However, these methods suffer from a critical drawback; the kernel size of the merged layers becomes larger, significantly undermining the latency reduction gained from reducing the depth of the network.
We show that this problem can be addressed by jointly pruning convolution layers and activation functions.
To this end, we propose \textit{LayerMerge}, a novel depth compression method that selects which activation layers and convolution layers to remove, to achieve a desired inference speed-up while minimizing performance loss.
Since the corresponding selection problem involves an exponential search space, we formulate a novel surrogate optimization problem and efficiently solve it via dynamic programming.
Empirical results demonstrate that our method consistently outperforms existing depth compression and layer pruning methods on various network architectures, both on image classification and generation tasks.
We release the code at \url{https://github.com/snu-mllab/LayerMerge}.
\end{abstract}


\section{Introduction}

\input{figures/figure1}

\begin{figure*}[t]
    \centering
    \includegraphics[width=1.0\linewidth]{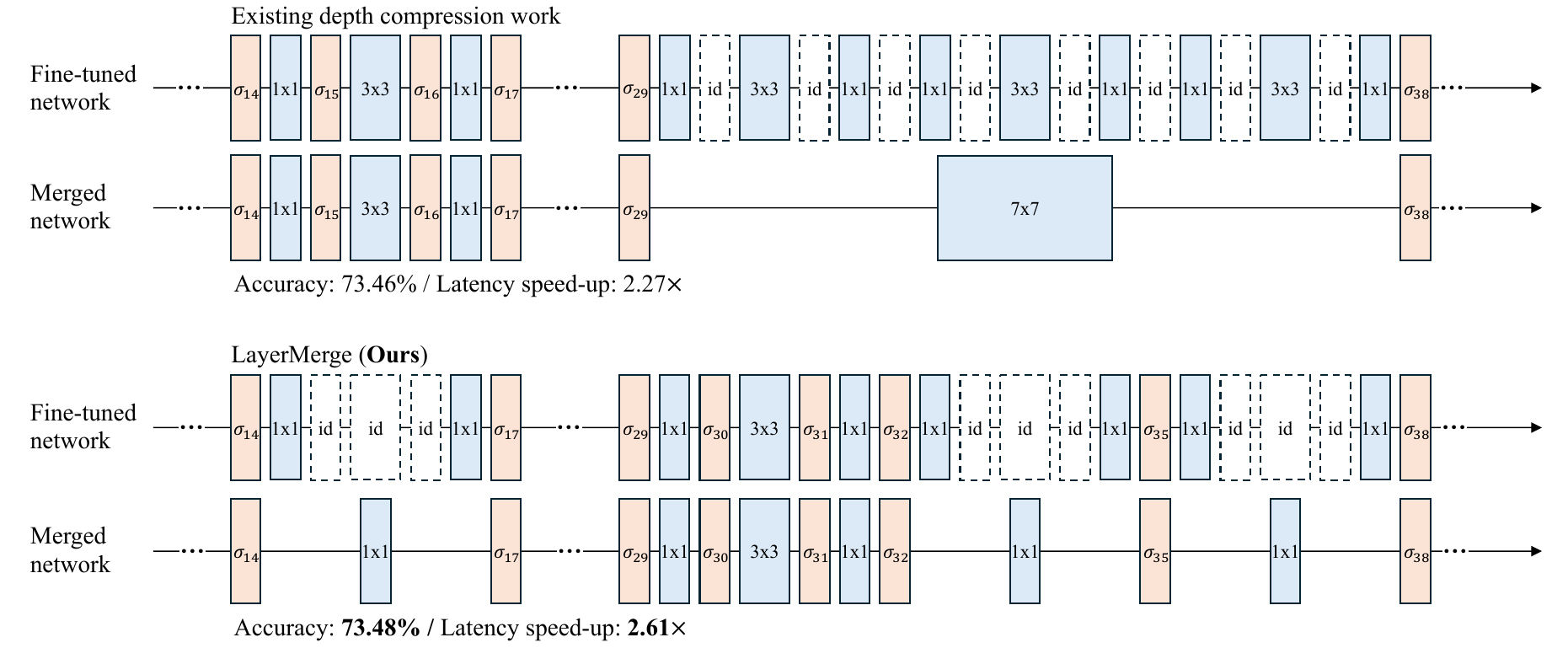}
    \caption{
    A qualitative example comparing our method to the depth compression baseline \citep{kim23efficient}, applied to MobileNetV2-1.4 model on ImageNet dataset.
    Existing depth compression methods have limitations in reducing latency due to the inevitable increase in the kernel size of the merged layer.
    Our method effectively bypasses this challenge by jointly optimizing the selection of the convolution layers and the non-linear activation layers.
    }
    \label{fig:qual}
\end{figure*}

Convolutional neural networks (CNNs) have shown remarkable performance in various vision-based tasks such as classification, segmentation, and object detection \citep{alexnet, deeplabv3plus, fastrcnn}.
More recently, diffusion probabilistic models employing U-Net based architecture are demonstrating great performance in various high-quality image generation tasks \citep{ddpm, unet}.
However, the impressive capabilities of these models on complex vision tasks come at the cost of increasingly higher computational resources and inference latency as they are scaled up \citep{iddpm,convnext}.

An effective approach to address this is \emph{structured pruning}, which consists of removing redundant regular regions of weights, such as channels, filters, and entire layers, to make the model more efficient without requiring specialized hardware while preserving its performance.
In particular, \textit{channel pruning} methods remove redundant channels in CNNs \citep{taylor16,pfec,halp}.
While these methods have shown significant improvements in accelerating models and reducing their complexity, they are less effective on architectures with a low number of channels compared to methods that reduce the number of layers in the network \citep{layerprune,depthshrinker,kim23efficient}.
One approach to reduce the number of layers is \textit{layer pruning}, which removes entire layers in the network \citep{shallowing,layerprune}.
Although such methods achieve a larger speed-up factor, they tend to suffer from severe degradation in performance due to their aggressive nature in removing parameters \citep{depthshrinker}.

To this end, a line of research called \textit{depth compression} or \textit{depth shrinking} proposes to remove redundant non-linear activation functions and to merge the resulting consecutive convolution layers into a single layer to achieve higher inference speed-up \citep{layerfolding,depthshrinker,kim23efficient}.
However, these methods have a fundamental limitation; merging consecutive convolution layers leads to an increase in the kernel size of the merged layers, significantly hindering the latency reduction gained from reducing the depth of the network (\Cref{fig:merge}).

In this work, we argue that this critical drawback of existing depth compression methods can be addressed by extending the search space and jointly optimizing the selection of both \textit{non-linear activation layers} and the \textit{convolution layers} to remove (\Cref{fig:qual}).
Since the corresponding optimization problem involves an exponential number of potential solutions, we propose a novel surrogate optimization problem that can be solved exactly via an efficient dynamic programming (DP) algorithm.

In particular, we make the following contributions:
\begin{itemize}\itemsep=0pt
    \item We propose a novel depth compression approach that introduces a new pruning modality; removing the convolution layers in addition to activation layers.
    This formulation encompasses the search space of existing depth compression methods and 
    enables us to bypass the increase in the kernel size of the merged convolution layers (\Cref{fig:qual}).
    \item We propose a surrogate optimization problem that can be solved exactly via a DP algorithm.
    We further develop an efficient method to construct DP lookup tables, leveraging the inherent structure of the problem.
    \item We conduct extensive experiments and demonstrate the effectiveness of our method on image classification and generation tasks with different networks, including ResNet, MobileNetV2, and DDPM \citep{resnet,mobilenetv2,ddpm}.
\end{itemize}

\section{Preliminaries}

Let $f_{\theta_l}$ and $\sigma_l$ denote the $l$-th convolution layer and the $l$-th activation layer, respectively.
Here, $\theta_l$ denotes the parameters of the convolution layer $f_{\theta_l}$. 
An $L$-layer CNN can be represented as $\bigO_{l=1}^L (\sigma_l \circ f_{\theta_l})$, where $\bigO$ denotes an iterated function composition, and the last activation function $\sigma_L$ is set to identity.

Depth compression methods eliminate less important non-linear activation layers, then \emph{merge} consecutive convolution layers into a single layer by applying a convolution operation to their parameters \citep{layerfolding,depthshrinker,kim23efficient}.
This approach leverages the fact that the successive convolution layers $f_{\theta_j} \circ f_{\theta_{j-1}} \circ \cdots \circ f_{\theta_i}$ can be represented as single equivalent convolution layer $f_{\theta_j \ast \cdots \ast \theta_i}$, where $\ast$ denotes the convolution operation.

However, this method has a fundamental limitation; the kernel size of the merged layer increases as more layers are merged.
This significantly undermines the latency speed-up achieved by reducing the number of layers (\Cref{fig:merge}).
To illustrate, let us denote the convolution parameter of the merged layer as $\hat{\theta} \coloneqq \theta_j \ast \cdots \ast \theta_i$ and assume that all convolution layers have a stride of $1$.
Then, the kernel size of this merged layer is given by
\begin{equation}
    \label{eq:kernel}
    \mathrm{Ker}(\hat{\theta}) = 1 + \sum_{l=i}^j \left( \mathrm{Ker}\left(\theta_l\right)-1\right), 
\end {equation}
where the $\mathrm{Ker}(\cdot)$ denotes the kernel size of the given convolution parameter.
To address this, we propose to jointly remove unimportant convolution layers and non-linear activation layers (\Cref{fig:qual}).

\section{LayerMerge}
In this section, we present our proposed depth compression method \emph{LayerMerge}, designed to address the increase in kernel size resulting from depth compression and find more efficient networks.
We first formulate the NP-hard subset selection problem we aim to solve.
Then, we present a surrogate optimization problem that can be exactly solved with an efficient dynamic programming algorithm.
Afterwards, we examine the theoretical optimality, complexity, and practical cost of our approach.

\subsection{Selection problem}
We observe that if we selectively replace certain convolution layers with identity functions ($\mathrm{id}$), we can effectively alleviate the problem of increasing kernel sizes resulting from merging layers.
Indeed, an identity layer can be represented with a $1 \times 1$ depthwise convolution layer, where the parameter values are set to $1$.
We denote the corresponding convolution parameters as $\theta_{\mathrm{id}} \in \mathbb{R}^{1 \times \mathrm{C}_{\mathrm{in}} \times 1 \times 1}$, where $\mathrm{C}_{\mathrm{in}}$ denotes the number of input channels.
Then, it is evident from \Cref{eq:kernel} that $\theta_{\mathrm{id}}$ does not contribute to the expansion of the kernel size.

To this end, we propose to optimize two sets of layer indices: $A \subseteq [L-1] \coloneqq \{1, \ldots, L-1\}$ for activation layers and $C \subseteq [L] = \{1, \ldots, L\}$ for convolution layers, where $L$ represents the depth of the original network. 
The indices in $A$ denote where we keep the original activation layers, while the indices in $C$ correspond to where the original convolution layers are maintained. 
For any layer $l$ not in $A$, we replace its activation function $\sigma_l$ with the $\mathrm{id}$ function. 
Similarly, if a layer $l$ is not in $C$, we substitute its convolution layer $f_{\theta_l}$ with the identity layer $f_{\theta_{\mathrm{id}}}$.

It is worth noting that it is non-trivial to remove a convolution layer when the shapes of the input and output feature maps are different.
To address this, we define a set of irreducible convolution layer indices $R$, where the input shape and the output shapes differ.
Concretely, we define $R$ as $R \coloneqq \{l \in [L] : \mathrm{Shape}(X^{(l-1)}) \neq  \mathrm{Shape}(X^{(l)})\}$, where $\mathrm{Shape}(\cdot)$ denotes the shape of a tensor and $X^{(l)}$ denotes the $l$-th layer feature map.
We then restrict the choice of $C$ to supersets of $R$, i.e.,  $R\subseteq C$.

Given a desired latency target $T_0 > 0$, our goal is to select $A$ and $C$ that maximize the performance of the resulting model after fine-tuning, while satisfying the latency target after merging.
We formulate this problem as follows:

{\small
\begin{align}\label{eq:master}
&\max_{A \subseteq [L-1], C\subseteq [L]}  \max_{\theta}  
\;\mathrm{Perf}\left( \bigO_{l=1}^L 
 \bigl( \sigma_{A, l}   \circ  f_{C, \theta_l, l} \bigr) \right) \\
&\text{subject to} \nonumber \\
&R \subseteq C, &   \hidewidth \text{(irreducible conv)} \nonumber\\
&\sigma_{A, l} = \left( \mathds{1}_A(l)\sigma_l + \left(1-\mathds{1}_A(l)\right) \mathrm{id} \right),  &  \hidewidth \text{(replaced act)}\nonumber\\
&f_{C, \theta_l, l} = \left(\mathds{1}_C(l) f_{\theta_{l}} + \left(1-\mathds{1}_C(l)\right)f_{\theta_\mathrm{id}} \right), &  \hidewidth \text{(replaced conv)}\nonumber\\
&\forall i \in [|A|+1]:\hat{\theta}_i= \Conv_{l=a_{i\!-\!1}\!+\!1}^{a_i} \left(\mathds{1}_C(l) \theta_{l}+ \left(1-\mathds{1}_C(l)\right)\theta_{\mathrm{id}} \right), \nonumber\\
& &  \hidewidth \text{(merged parameters)}\nonumber\\
&T\left( \bigO_{i=1}^{|A|} \bigl( \sigma_{a_i} \circ f_{\hat{\theta}_i}\bigr) \right) < T_0, &  \hidewidth\text{(latency constraint)}\nonumber
\end{align}
}
where $\conv$ denotes an iterated convolution operation, $a_0 = 0$, $a_{|A|+1} = L$, and $(a_i)_{i=1}^{|A|}$ denotes the elements of the set $A$ in ascending order.
Here, $\mathrm{Perf}(\cdot)$ and $T(\cdot)$ denote the performance and latency of the network, respectively.
The performance of the network is defined as a task-dependent metric: accuracy for classification tasks and negative diffusion loss for generation tasks \citep{ddpm}.
The indicator function $\mathds{1}_X (x)$ is equal to $1$ if $x \in X$, and $0$ otherwise.
We denote by $\sigma_{A, l}$ the $l$-th activation layer replaced according to set $A$, and by $f_{C, \theta_l, l}$ the $l$-th convolution layer replaced according to set $C$.
The parameter $\hat{\theta}_i$ is the $i$-th convolution layer in the merged network. 

Note that we used the pruned network before merging in the objective, while the latency constraint is applied to the merged network. Both networks represent the same function and yield the same output, and thus have the same performance objective. However, in practice, we observe that it is better to merge consecutive convolution layers only at inference time after fine-tuning is finished. We chose to use the network before merging in the objective to stress this.

\subsection{Surrogate optimization problem}

Solving Problem \eqref{eq:master} in general is NP-hard. 
We propose to assign an importance value for each merged layer and optimize the sum of the importance values as a surrogate objective.
This is a common approximation used in the literature \cite{halp, spdy, kim23efficient}. We also approximate the overall latency of the merged network with the sum of the layer-wise latencies \citep{proxylessnas,halp}.
The main challenge that we face then is the exponentially large number of potential combinations of the merged layers that arise from the joint optimization over $C$.
To this end, we develop an efficient method for measuring latency and importance values, leveraging the inherent combinatorial structure of the problem.
Subsequently, we compute the optimal solutions of the surrogate problem in polynomial-time using a dynamic programming algorithm. 

\paragraph{Latency cost} 

We construct a latency lookup table for all possible merged layers.
A straightforward approach is to construct a table with entries $T[i, j, C]$ for all $i, j \in \{0, \ldots, L\}$, $i<j$, where each entry denotes the latency of the layer obtained by merging from the $(i+1)$-th layer to the $j$-th layer after replacing the convolution layers according to $C$.
However, this approach is not feasible because it requires measuring the latency for the exponential number of possible sets $C \cap (i,j]$.

To address this, we note that the choice of $C$ only affects the latency of a merged layer via the size of its kernel, since the number of input and output channels is fixed.
To this end, we propose to construct the latency table with entries $T[i, j, k]$, where the last index $k$ denotes the kernel size of the merged layer, given by $k = 1 + \sum_{l\in C \cap(i,j]} (\mathrm{Ker}(\theta_l) - 1)$.

Let $K_{ij}$ be the set of possible merged kernel sizes that can appear after merging from the $(i+1)$-th layer to the $j$-th layer.
Note that $|K_{ij}| \leq 1 + \sum_{l = i+1}^j (\mathrm{Ker}(\theta_l) - 1)$. 
Therefore, constructing the latency table with entries $T[i, j, k]$ requires $O(L^2 K_0)$ latency measurements, where $K_0 \coloneqq \sum_l \mathrm{Ker}(\theta_l)$ denotes the sum of the kernel sizes in the original network.
This is significantly lower than the $O(L^2 2^L)$ measurements needed to construct the latency table with entries $T[i,j,C]$.

\paragraph{Importance value}
Similarly to the latency cost case, we construct the importance lookup table with entries $I[i, j, k]$. 
Each entry denotes the importance of the layer obtained by merging from the $(i+1)$-th layer to the $j$-th layer, with the merged layer having a kernel size $k$.

We define the importance value of each merged layer as the change in the performance after replacing the corresponding part of the original network with the merged layer. 
However, multiple choices of $C$ can yield the same kernel size $k$ in the merged layer, but will vary in performance.

We propose to keep the convolution layers with the largest parameters $\ell_1$-norm among those resulting in the same merged kernel size $k$. 
This simple yet effective criterion is often used for channel and layer pruning \citep{pfec,layerprune}.
Concretely, for any $i, j \in \{0, \ldots, L\}$, $i<j$, and $k \in K_{ij}$, we let
\begin{align}\label{eq:ACij}
\widehat{C}_{ijk} \coloneqq & \argmax_{ C_{ij} \subseteq (i, j]} ~ \sum_{l \in C_{ij}} \| \theta_l\|_1  \\
&\text{subject to } ~ 1 + \!\!\sum_{l\in C_{ij}} (\mathrm{Ker}(\theta_l) - 1) = k, \nonumber\\
& \quad \quad \quad \quad ~~ R \cap (i, j] \subseteq C_{ij}.  \nonumber\\
\widetilde{C}_{ijk} \coloneqq &\{1, \ldots, i\} \cup \widehat{C}_{ijk} \cup \{j+1, \ldots, L\} \nonumber. \\
\widetilde{A}_{ij} \coloneqq &\{1, \ldots, i\} \cup \{j, \ldots, L-1\}.\nonumber 
\end{align}

Computing $\widetilde{C}_{ijk}$ has a negligible cost. 
We can now define the importance $I[i, j, k]$ as follows:

\begin{align}
\label{eq:imp}
    I[i, j, k] \coloneqq \exp & \left(
    \max_{\theta} \mathrm{Perf}\biggl(\bigO_{l=1}^L \bigl(\underbrace{\sigma_{\widetilde{A}_{ij}, l}}_{\text{Replaced act}} \circ \underbrace{f_{\widetilde{C}_{ijk}, \theta_l, l}}_{\text{Replaced conv}} \bigr)\biggr) \right. \nonumber \\
    & - \left. \max_{\theta} \mathrm{Perf} \biggl(\underbrace{\bigO_{l=1}^L \bigl(\sigma_{l} \circ f_{\theta_l} \bigr)}_{\text{Original network}}\biggr)\right).
\end{align}

We use $\exp(\cdot)$ to normalize the importance value.
This choice is based on our empirical observation that using positive values for importance leads to better performance by favoring solutions with more activation layers.
In practice, to estimate the first term, we measure the performance of the network after fine-tuning it for a few steps.
For the second term, we use the performance of the pre-trained original network.
Constructing the importance table requires $O(L^2 K_0)$ importance value evaluations, which is identical to the number of latency measurements needed.

\paragraph{Optimization problem}
After we pre-compute the latency and importance lookup tables, $T$ and $I$, 
we maximize the sum of the importance values of merged layer under the constraint on the sum of the latency costs.
This can be formulated as follows:

{\small
\begin{align} \label{eq:dp_master} 
&\max_{A \subseteq [L-1], k_i} \sum_{i=1}^{|A|+1} I[a_{i-1}, a_{i}, k_{i}] \\
&\text{subject to } ~~\sum_{i=1}^{|A|+1} T[a_{i-1}, a_i,k_i] < T_0, &\text{(latency constraint)} \nonumber\\
&~~~~~~~~~~~~~~~~~~~~k_i \in K_{a_{i-1} a_i},
&\text{(merged kernel size)}\nonumber
\end{align}
}

where $a_0 = 0$, $a_{|A|+1} = L$ as before.
Given a solution $A^*, (k^*_i)_{i=1}^{|A^*|+1}$ of Problem \eqref{eq:dp_master}, the corresponding set of convolution layers we keep is given by $C^* = \bigcup_i \widehat{C}_{a^*_{i-1}, a^*_i, k^*_i}$.

\begin{figure}[t]
  \centering
    \begin{algorithm}[H]
    \caption{DP algorithm for Problem \eqref{eq:dp_master}}
    \label{alg:dp}
    \begin{algorithmic}
    \INPUT Importance $I$, latency $T$, latency budget $T_0$, discretization level $P$
    \STATE Initialize $M[0, t] \leftarrow 0$ for $t \geq 0$, $M[l, t] \leftarrow -\infty$ for $t < 0$, $A[0, t] \leftarrow \emptyset$, $C[0, t] \leftarrow \emptyset$
    \STATE Discretize latency values in $T$
    
    \FOR{$l=1$ {\bfseries to} $L$} 
    \FOR{$t \in \{\frac{T_0}{P}, \frac{2T_0}{P} \ldots, T_0\}$}
    \STATE $l^*, k^* \leftarrow \!\!\!\! \underset{0 \leq l' < l,\,k \in K_{l' l}}{\mathrm{argmax}}\!\!\!\!\left(M[l', t - T[l', l, k]]+ I[l', l, k]\right)$
    \STATE  $M[l, t] \leftarrow M[l^*, t - T[l^*, l, k^*]]+ I[l^*, l, k^*]$
    \STATE $A[l, t] \leftarrow A[l^*, t - T[l^*, l, k^*]] \cup \{l^*: l^* > 0\}$ 
    \STATE $\widehat{C}_{l^* l k^*} \leftarrow $ compute via \cref{eq:ACij} 
    \STATE $C[l, t] \leftarrow C[l^*, t - T[l^*, l, k^*]] \cup \widehat{C}_{l^* l k^*}$
    \ENDFOR
    \ENDFOR
    \STATE $A^* \leftarrow A[L, T_0]$, $C^* \leftarrow C[L, T_0]$ 
    \STATE $k^*_i \leftarrow 1 + \sum_{l\in C^* \cap (a_{i-1}^*, a_{i}^*]} (\mathrm{Ker}(\theta_l) - 1), \forall i \in [|A^*|+1]$
    \OUTPUT $A^*$, $C^*$, and $(k_i^*)_{i=1}^{|A^*|+1}$.
    \end{algorithmic}
    \end{algorithm}
    \vspace{-2em}
\end{figure}

\begin{figure}[t]
    \centering
    \begin{algorithm}[H]
    \caption{LayerMerge}
    \label{alg:layermerge}
    \begin{algorithmic}
    \INPUT Input network $f$, latency budget $T_0$, descretization level $P$
    \FOR{$i=0$ {\bfseries to} $L-1$} 
    \FOR{$j=i + 1$ {\bfseries to} $L$} 
    \FOR{$k \in K_{ij}$}
    \STATE $\widetilde{A}_{ij}, \widetilde{C}_{ijk} \leftarrow $ compute via \Cref{eq:ACij}
    \STATE $\hat{\theta}_{ij}= \conv_{l=i\!+\!1}^{j} \left(\mathds{1}_{\widetilde{C}_{ijk}}(l) \theta_{l}+ \left(1-\mathds{1}_{\widetilde{C}_{ijk}}(l)\right)\theta_{\mathrm{id}} \right)$
    \STATE $I[i, j, k] \leftarrow $ compute via \Cref{eq:imp}
    \STATE $T[i, j, k] \leftarrow T(f_{\hat{\theta}_{ij}})$
    \ENDFOR
    \ENDFOR
    \ENDFOR
    \STATE $A^*, C^*, k^* \leftarrow$ \Cref{alg:dp}$\left(I, T, T_0, P\right)$
    \STATE Replace activation functions outside $A^*$ by $\mathrm{id}$ and convolution layers outside $C^*$ by $f_{\theta_{\mathrm{id}}}$ 
    \STATE Fine-tune and merge the network
    \OUTPUT Merged network
    \end{algorithmic}
    \end{algorithm}
\end{figure}

\subsection{Dynamic programming algorithm}

Once we construct the lookup tables $T$ and $I$, we can obtain an exact solution of Problem \eqref{eq:dp_master} for discretized latency values, using dynamic programming (DP).
In particular, we discretize latency values in the lookup table $T$ by rounding them down to the closest values in $\{\frac{T_0}{P}, \frac{2T_0}{P} \ldots, T_0\}$, where $P$ is a large natural number that represents the discretization level.

Then, we consider a sub-problem of Problem \eqref{eq:dp_master} where we maximize over the first $\sblue{l}\in \{0, \ldots, L\}$ layers with latency budget $\sblue{t} \in \{\frac{T_0}{P}, \frac{2T_0}{P} \ldots, T_0\}$, as follows: 

{
\begin{align} \label{eq:dp_subprob} 
&\max_{A \subseteq [\sblue{l}-1], k_i} \sum_{i=1}^{|A|+1} I[a_{i-1}, a_{i}, k_{i}] \\
&\text{subject to } \sum_{i=1}^{|A|+1} T[a_{i-1}, a_i,k_i] < \sblue{t}, &\text{(latency constraint)} \nonumber\\
&~~~~~~~~~~~~~~~~~~~k_i \in K_{a_{i-1} a_i},
&\text{(merged kernel size)}\nonumber
\end{align}
}

where $a_0=0$ and $a_{|A|+1} = \sblue{l}$.
We define $M[l, t]$ as the corresponding maximum objective value achievable in the sub-problem \eqref{eq:dp_subprob}. Then $M[L, T_0]$ gives the maximum objective value achievable in Problem \eqref{eq:dp_master}.
 We initialize $M[0, t] = 0$ for $t \geq 0$, and  $M[l, t] = - \infty$ for $t < 0$. 
Then, for $l>0$, the recurrence of the DP algorithm can be written as follows:

{
\begin{align} \label{eq:dp_recur}
M[l, t] = \max_{0 \leq l' < l,\,k \in K_{l' l}} \left(\underbrace{M[l', t - T[l', l, k]]}_{\text{Optimal importance sum until $l'$-layer}}\right. \nonumber\\
\left. + \underbrace{I[l', l, k]}_{\substack{\text{Importance value of the last compressed layer}}}\right).
\end{align}
}

We present the DP algorithm for Problem \eqref{eq:dp_master} in \Cref{alg:dp}. 
Once we compute the optimal sets $A^*$ and $C^*$, we fine-tune the network after replacing the layers accordingly.
Then, we merge every convolution layer between $a_{i-1}^*$ and $a_{i}^*$ for all $i \in [|A|+1]$ at inference time.
We outline the overall procedure of LayerMerge in \Cref{alg:layermerge}.

\subsection{Theoretical analysis}

We show that the proposed DP algorithm optimally solves the surrogate optimization Problem \eqref{eq:dp_master}. The proof is given in \Cref{app:proof}.

\begin{restatable}{theorem}{DPoptimal} 
The solution $A^*$ and $(k_i^*)_{i=1}^{|A^*|+1}$ given by \Cref{alg:dp} is an optimal solution of Problem \eqref{eq:dp_master}.
\label{prop:dp}
\end{restatable}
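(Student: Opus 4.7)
The plan is a standard induction on the layer index $l$, establishing the invariant that the value $M[l,t]$ computed by \Cref{alg:dp} coincides with the optimum of the sub-problem \eqref{eq:dp_subprob} for every $l \in \{0,\ldots,L\}$ and every discretized budget $t \in \{\frac{T_0}{P},\ldots,T_0\}$. Once this invariant is proved, instantiating it at $l = L$ and $t = T_0$ identifies $M[L,T_0]$ with the optimum of Problem \eqref{eq:dp_master}, and the bookkeeping tables $A[\cdot,\cdot]$ and $C[\cdot,\cdot]$ trace back a concrete argmax to a witnessing pair $(A^*, (k_i^*))$.

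For the base case $l = 0$, the only feasible solution of \eqref{eq:dp_subprob} has $A = \emptyset$ (so $a_0 = a_{|A|+1} = 0$), contributes an empty sum, and has value $0$; this matches the initialization $M[0,t] = 0$ for $t \geq 0$ and $M[0,t] = -\infty$ for $t < 0$. For the inductive step, the central observation is an optimal-substructure decomposition: any feasible solution $(A, (k_i)_{i=1}^{|A|+1})$ at $(l,t)$ is uniquely specified by its last split point $l' := a_{|A|}$ (with the convention $l' = 0$ when $A = \emptyset$), the kernel size $k := k_{|A|+1}$ of the trailing chunk $(l',l]$, and a feasible solution for the sub-problem at $(l', \, t - T[l',l,k])$. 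Conversely, any feasible smaller solution extends uniquely by appending the chunk $(l',l]$ with kernel $k$. Maximising over the pair $(l',k)$ yields exactly the recurrence \eqref{eq:dp_recur}; combining this with the inductive hypothesis on $M[l',\cdot]$ shows the algorithm's update reproduces the true optimum at $(l,t)$.

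The remaining work is purely bookkeeping. One verifies that the empty-$A$ case is subsumed by the $l' = 0$ branch thanks to $M[0,\cdot] = 0$; that the $-\infty$ convention for negative budgets automatically discards latency-infeasible combinations so that the resulting $A^*$ satisfies the latency constraint; and that the reconstruction of $C^*$ via concatenating the subsets $\widehat{C}_{l^\ast l k^\ast}$ along the traceback, together with the formula $k_i^* = 1 + \sum_{l\in C^*\cap(a_{i-1}^*,a_i^*]}(\mathrm{Ker}(\theta_l)-1)$, returns the same kernel sizes that were selected as argmaxes in the recurrence. This last identity is immediate from the kernel-size constraint that defines $\widehat{C}_{ijk}$ in \eqref{eq:ACij}.

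The main obstacle is conceptually very mild: it is simply making the decomposition above rigorous for all corner cases (empty $A$, tight latency budget, ties in the argmax). None of these presents genuine mathematical difficulty; the argument is a textbook DP correctness proof built on top of the careful parameterisation of merged layers by triples $(i,j,k)$ introduced earlier in the section, which is the only non-standard ingredient.
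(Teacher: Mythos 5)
Your proposal is correct and follows essentially the same route as the paper: induction over $(l,t)$ on the sub-problems \eqref{eq:dp_subprob}, decomposing an optimal solution at its last split point and invoking the recurrence \eqref{eq:dp_recur}. The only difference is stylistic — you argue the invariant $M[l,t]=\mathrm{opt}(l,t)$ directly (both inequalities), whereas the paper phrases the same decomposition as a proof by contradiction against a hypothetically better solution — so no substantive gap remains.
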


The time complexity of the DP algorithm is $O\left(L^2 P K_0 \right)$.
In practice, the DP algorithm is highly efficient, typically completing within a few seconds on CPU.
Furthermore, our method efficiently computes the DP lookup tables, exploiting the structure of the problem.
It is worth noting that this can be done in an embarrassingly parallel fashion.
We report the wall-clock time for constructing the DP lookup tables 
in \Cref{app:cost}.

\section{Experiments}
In this section, we provide experimental results demonstrating the effectiveness of our method across different network architectures and tasks.
We apply our method on ResNet-34 and MobileNetV2 models \citep{resnet,mobilenetv2} for the image classification task, and on the DDPM model \citep{ddpm} for the image generation task.
We present additional details on how we handle normalization layers, strided convolutions, padding, skip connections, and other network-specific implementation details of our method in \Cref{app:detail}.

\paragraph{Baselines}
We compare our method to a depth compression method and a layer pruning method since both approaches rely, like our method, on reducing the number of layers of the network to accelerate it. We also include a comparison with a knowledge distillation method in \cref{app:AddExps}.
For the depth compression baseline, we use the state-of-the-art work of \citet{kim23efficient}, which we denote as \textit{Depth}.

Existing layer pruning methods do not directly take latency into consideration during pruning \citep{discriminative,shallowing,layerprune}.  
To address this gap, we propose a variant of our method tailored specifically for layer pruning, which we use as our layer pruning baseline.



Specifically, we assign an importance value $I[l]$ and a latency cost $T[l]$ for each convolution layer $l \in [L]$.
The importance value of the convolution layer is defined as the change in performance resulting from replacing the layer with the identity layer in the original network and then fine-tuning it for a few steps.
As in \textit{LayerMerge}, we approximate the overall latency and importance of the network by the sum of the layer-wise latencies and importance values, respectively.
We solve the following surrogate problem:
\begin{align} \label{eq:dplayer}
& \max_{R \subseteq C\subseteq [L]} ~\sum_{l \in C} I[l] 
\\
& \text{subject to } \sum_{l \in C} T[l] < T_0. & \text{(latency constraint)} \nonumber
\end{align}


Problem \eqref{eq:dplayer} is a 0-1 knapsack problem that can be solved exactly for discretized latency values via a DP algorithm in $O(L P)$ time, where $P$ denotes the discretization level.
We denote this method as \textit{LayerOnly}.

We additionally compare with a channel pruning baseline for each network.
Note that channel pruning is an orthogonal approach to depth compression.
Nonetheless, we include channel pruning results to study the effectiveness of reducing the number of layers compared to reducing the width in different types of networks.
We compare with HALP \citep{halp} on ResNet-34, with AMC and MetaPruning \citep{amc,metapruning} on MobileNetV2, and with Diff-Pruning \citep{diffpruning} on DDPM.

For ResNet-34, we apply the depth compression and the channel pruning baselines using their publicly available code \citep{halp,kim23efficient}.
We also use the pre-trained weights of the original network from \citet{timm} for all compression methods.
For MobileNetV2, we report the results for the depth compression baseline from the original paper \citep{kim23efficient}.
For the channel pruning baselines, we prune channels of each layer using the same channel ratio of their optimized model from their open-sourced code \citep{amc,metapruning}.
We use the pre-trained weights of the original network from the public code of \citet{kim23efficient} to ensure a fair comparison. 
For DDPM, we apply the depth compression and the channel pruning baselines using their open-sourced code \citep{kim23efficient,diffpruning}.
We use the pre-trained weights of the original network from \citet{ddim} for all compression methods in DDPM.
In all networks, we compare the compression results in each table using an identical fine-tuning schedule for fair comparison.

Finally, we provide ablation studies on the importance of joint optimization on activation layers and convolution layers in our method, where we compare our method to sequentially applying {Depth} then {LayerOnly}.
Throughout this section, we refer to each compressed model obtained by Depth, LayerOnly, and LayerMerge as Depth-$p$\%, LayerOnly-$p$\%, and LayerMerge-$p$\%, respectively. 
Here, $p$\% is calculated as ${T_0}/{T_\mathrm{orig}}$, where $T_0$ is the chosen latency budget and $T_\mathrm{orig}$ is the latency of the original model.

\paragraph{Experimental setup}
We construct the latency lookup table of each method on RTX2080 Ti GPU and report the wall-clock latency speed-up of the compressed networks measured on the same device.
We provide the details on the measurement process in \Cref{app:cost}.
Notably, we measure the latency of the network in two different formats, PyTorch format and TensorRT compiled format \citep{pytorch,tensorrt}.
When measuring latency speedup, we use a batch size of 128 for the ImageNet dataset and the CIFAR10 dataset, following the same measurement protocol from \citet{kim23efficient,halp,diffpruning}.
For ResNet-34, we fine-tune each pruned network for 90 epochs following the same fine-tuning recipe as HALP \citep{halp}.
For MobileNetV2, we fine-tune for 180 epochs, using the same fine-tuning recipe as \citet{kim23efficient}.
For DDPM, we follow the fine-tuning and sampling recipe of Diff-Pruning \citep{diffpruning}, except for the learning rate which we set to $4 \times 10^{-4}$ since it leads to better performance. We present a representative subset of the results in Tables \ref{tab:rn34} to \ref{tab:abl}. For additional results, see \cref{app:AddExps}.

\subsection{Classification task results}
In this section, we evaluate the performance of the different pruning methods on ResNet-34, MobileNetV2-1.0, and MobileNetV2-1.4 models, on the ImageNet dataset \citep{resnet,mobilenetv2,imagenet}.
We report the last top-1 accuracy of the compressed model after fine-tuning, evaluated on the validation set, and its corresponding latency speedup.
\begin{table}[t]
\caption{Accuracy and latency speed-up of applying compression methods to ResNet-34 on ImageNet dataset. The latency speed-up is measured on RTX2080 Ti GPU at batch size 128.}
\vspace{0.5em}
\centering
\begin{adjustbox}{max width=1.0\columnwidth}
\begin{tabular}{lcccc}
\toprule
 && \normalsize{PyTorch}& \normalsize{TensorRT}\\
 Network     & Acc (\%) $\uparrow$& Speed-up $\uparrow$ & Speed-up $\uparrow$\\
 \cmidrule(r){1-2}\cmidrule(r){3-3} \cmidrule(r){4-4}
    ResNet-34  & 74.42 & 1.00$\times$ & 1.00$\times$ \\
 \cmidrule(r){1-2}\cmidrule(r){3-3} \cmidrule(r){4-4}
    HALP-80\%  \citep{halp} & 73.98 & 1.23$\times$ & \textbf{1.25}$\times$ \\
    Depth-78\% \citep{kim23efficient} & 73.49 & 1.24$\times$ & 1.14$\times$ \\
    LayerOnly-73\% (Ours) & 74.06 & 1.33$\times$ & 1.24$\times$ \\
    LayerMerge-71\% (Ours)& \textbf{74.26}  & \textbf{1.36}$\times$ & \textbf{1.25}$\times$ \\
 \cmidrule(r){1-2}\cmidrule(r){3-3} \cmidrule(r){4-4}
    HALP-65\% \citep{halp} & 73.36 & 1.48$\times$ & 1.45$\times$ \\
    Depth-68\% \citep{kim23efficient}& 73.35 & 1.40$\times$ & 1.26$\times$ \\
    LayerOnly-59\% (Ours)       & 73.31 & \textbf{1.65}$\times$ & 1.48$\times$ \\
    LayerMerge-60\% (Ours)      & \textbf{73.46} & 1.56$\times$ & \textbf{1.50}$\times$ \\
 \cmidrule(r){1-2}\cmidrule(r){3-3} \cmidrule(r){4-4}
    HALP-55\% \citep{halp} & 72.69 & 1.69$\times$ & 1.69$\times$ \\
    Depth-63\% \citep{kim23efficient}& 72.33 & 1.43$\times$ & 1.24$\times$ \\
    LayerOnly-49\% (Ours) & 72.58 & \textbf{1.82}$\times$ & 1.64$\times$ \\
    LayerMerge-50\% (Ours) & \textbf{72.84} & 1.79$\times$ & \textbf{1.72}$\times$ \\
\bottomrule
\end{tabular}
\end{adjustbox}
\label{tab:rn34}
\end{table}

\paragraph{ResNet-34}
\Cref{tab:rn34} summarizes the different compression results on ResNet-34. 
HALP-$p$\% refers to the pruned model obtained by HALP by setting the latency budget to be $p$\% of the original model latency.
LayerMerge outperforms existing channel pruning and depth compression baselines.
Specifically, we achieve 1.10$\times$ speed-up in PyTorch with 0.77\% point higher accuracy compared to the depth compression baseline (comparing LayerMerge-71\% to Depth-78\%).
It is worth noting that the layer pruning variant of our method performs on par with LayerMerge on ResNet-34.
This is mainly due to ResNet-34 being more suitable for layer pruning than depth compression.
Indeed, LayerMerge frequently opts for pruning convolution layers over activation functions when applied to ResNet-34.

\paragraph{MobileNetV2}

\Cref{tab:mbv2-1.0} presents the various compression results on MobileNetV2-1.0.
AMC-$p$\% denotes the pruned model obtained by AMC by setting the FLOPs budget to be $p$\% of the original model FLOPs.
LayerMerge surpasses existing channel pruning and depth compression baselines, as well as our layer pruning variant.
In particular, we achieve 1.63$\times$ speed-up in PyTorch without losing accuracy from the original network (LayerMerge-55\%).  
\Cref{tab:mbv2-1.4} further shows the compression results on MobileNetV2-1.4.
LayerMerge outperforms existing methods, offering 0.23\% point higher accuracy with a larger speed-up compared to the depth compression baseline (comparing LayerMerge-43\% to Depth-62\%).
The gain in efficiency stems from its unique capability to jointly prune layers and merge them, as demonstrated in \Cref{fig:qual}.

\begin{table}[t]
\caption{Accuracy and latency speed-up of applying compression methods to MobileNetV2-1.0 on ImageNet dataset. The latency speed-up is measured on RTX2080 Ti GPU at batch size 128.}
\vspace{0.5em}
\centering
\begin{adjustbox}{max width=1.0\columnwidth}
\begin{tabular}{lcccc}
\toprule
 && \normalsize{PyTorch}& \normalsize{TensorRT}\\
 Network     & Acc (\%) $\uparrow$& Speed-up $\uparrow$ & Speed-up $\uparrow$\\
 \cmidrule(r){1-2}\cmidrule(r){3-3} \cmidrule(r){4-4}
    MobileNetV2-1.0  & 72.89 & 1.00$\times$ & 1.00$\times$ \\
 \cmidrule(r){1-2}\cmidrule(r){3-3} \cmidrule(r){4-4}
    AMC-70\% \citep{amc} & 72.01 & 1.32$\times$ & 1.34$\times$ \\
    Depth-74\% \citep{kim23efficient} & 72.83 & 1.62$\times$ & \textbf{1.42}$\times$ \\
    LayerOnly-73\% (Ours) & 69.66 & 1.30$\times$ & 1.35$\times$\\
    LayerMerge-55\% (Ours)& \textbf{72.99}  & \textbf{1.63}$\times$ & \textbf{1.42}$\times$ \\
 \cmidrule(r){1-2}\cmidrule(r){3-3} \cmidrule(r){4-4}
    Depth-66\% \citep{kim23efficient} & 72.13 & 1.88$\times$ & 1.57$\times$ \\
    LayerMerge-46\% (Ours) & \textbf{72.46} & \textbf{1.90}$\times$ & \textbf{1.65}$\times$ \\
 \cmidrule(r){1-2}\cmidrule(r){3-3} \cmidrule(r){4-4}
    Depth-59\% \citep{kim23efficient} & 71.44 & 2.07$\times$ & 1.79$\times$ \\
    LayerMerge-38\% (Ours) & \textbf{71.74} & \textbf{2.18}$\times$ & \textbf{1.84}$\times$ \\
 \cmidrule(r){1-2}\cmidrule(r){3-3} \cmidrule(r){4-4}
    Depth-53\% \citep{kim23efficient} & 70.65 & 2.47$\times$ & 1.97$\times$ \\
    LayerMerge-33\% (Ours) & \textbf{70.99} & \textbf{2.49}$\times$ & \textbf{2.05}$\times$ \\
\bottomrule
\end{tabular}
\end{adjustbox}
\label{tab:mbv2-1.0}
\end{table}

\begin{table}[t]
\caption{Accuracy and latency speed-up of applying compression methods to MobileNetV2-1.4 on ImageNet dataset. The latency speed-up is measured on RTX2080 Ti GPU at batch size 128.}
\vspace{0.5em}
\centering
\begin{adjustbox}{max width=1.0\columnwidth}
\begin{tabular}{lcccc}
\toprule
 && \normalsize{PyTorch}& \normalsize{TensorRT}\\
 Network     & Acc (\%) $\uparrow$& Speed-up $\uparrow$ & Speed-up $\uparrow$\\
 \cmidrule(r){1-2}\cmidrule(r){3-3} \cmidrule(r){4-4}
    MobileNetV2-1.4  & 76.28 & 1.00$\times$ & 1.00$\times$ \\
 \cmidrule(r){1-2}\cmidrule(r){3-3} \cmidrule(r){4-4}
    MetaPruning-1.0$\times$ \citep{metapruning} & 73.69 & 1.59$\times$ & 1.38$\times$ \\
    Depth-62\% \citep{kim23efficient} & 74.68 & 1.93$\times$ & \textbf{1.61}$\times$ \\
    LayerOnly-75\% (Ours) & 73.94 & 1.27$\times$ & 1.28$\times$ \\
    LayerMerge-43\% (Ours)& \textbf{74.91}  & \textbf{1.99}$\times$ & \textbf{1.61}$\times$ \\
 \cmidrule(r){1-2}\cmidrule(r){3-3} \cmidrule(r){4-4}
    Depth-60\% \citep{kim23efficient} & 74.19 & 1.99$\times$ & 1.67$\times$ \\
    LayerMerge-42\% (Ours) & \textbf{74.48} & \textbf{2.07}$\times$ & \textbf{1.73}$\times$ \\
 \cmidrule(r){1-2}\cmidrule(r){3-3} \cmidrule(r){4-4}
    Depth-53\% \citep{kim23efficient} & 73.46 &	2.27$\times$ & 1.85$\times$  \\
    LayerMerge-35\% (Ours) & \textbf{73.99} & \textbf{2.39}$\times$ & \textbf{1.93}$\times$ \\
 \cmidrule(r){1-2}\cmidrule(r){3-3} \cmidrule(r){4-4}
    Depth-46\% \citep{kim23efficient}& 72.57 & 2.41$\times$ & 2.01$\times$ \\
    LayerMerge-30\% (Ours) & \textbf{73.29} & \textbf{2.72}$\times$ & \textbf{2.12}$\times$ \\
\bottomrule
\end{tabular}
\end{adjustbox}
\label{tab:mbv2-1.4}
\end{table}

\subsection{Generation task results}

\begin{table}[t]
\caption{FID metric and PyTorch latency speed-up of compression methods applied to DDPM on CIFAR10 dataset. The latency speed-up is measured on RTX2080 Ti GPU at batch size 128.}
\vspace{0.5em}
\centering
\begin{adjustbox}{max width=1.0\columnwidth}
\begin{tabular}{lcccc}
\toprule
 && \normalsize{PyTorch} & Fine-tune\\
 Network     &  FID $\downarrow$ & Speed-up $\uparrow$ & Steps $\downarrow$\\
 \cmidrule(r){1-2}\cmidrule(r){3-3} \cmidrule(r){4-4}
    DDPM  & 4.18 & 1.00$\times$ & - \\
 \cmidrule(r){1-2}\cmidrule(r){3-3} \cmidrule(r){4-4}
    Depth-89\% \citep{kim23efficient}& 4.21 &  1.04$\times$ & 100K \\
    LayerOnly-77\% (Ours) & 4.64 & 1.09$\times$ & 100K\\
    LayerMerge-73\% (Ours)& \textbf{4.16}  & \textbf{1.13}$\times$ & 100K\\
 \cmidrule(r){1-2}\cmidrule(r){3-3} \cmidrule(r){4-4}
    Depth-85\% \citep{kim23efficient}& 4.78 & 1.08$\times$ & 100K \\
    LayerOnly-69\% (Ours) & 5.52 & 1.14$\times$ & 100K \\
    LayerMerge-70\% (Ours) & \textbf{4.55} & \textbf{1.16}$\times$ & 100K \\
 \cmidrule(r){1-2}\cmidrule(r){3-3} \cmidrule(r){4-4}
    LayerOnly-54\% (Ours) & 6.23 &  1.26$\times$ & 100K \\
    LayerMerge-58\% (Ours) & \textbf{5.61} & \textbf{1.27}$\times$ & 100K  \\
\bottomrule
\end{tabular}
\end{adjustbox}
\label{tab:ddpm}
\end{table}

\begin{table}[t]
\caption{FID metric and PyTorch latency speed-up of compression methods applied to channel pruned DDPM on CIFAR10 dataset. The latency speed-up is measured on RTX2080 Ti GPU at batch size 128. Diff-$p$\% denotes applying Diff-Pruning with $p$\% compression ratio.}
\vspace{0.5em}
\centering
\begin{adjustbox}{max width=1.0\columnwidth}
\begin{tabular}{lcccc}
\toprule
 && \normalsize{PyTorch} & Fine-tune\\
 Network     &  FID $\downarrow$ & Speed-up $\uparrow$ & Steps $\downarrow$\\
 \cmidrule(r){1-2}\cmidrule(r){3-3} \cmidrule(r){4-4}
    DDPM  & 4.18 & 1.00$\times$ & - \\
 \cmidrule(r){1-2}\cmidrule(r){3-3} \cmidrule(r){4-4}
    Diff-30\% \citep{diffpruning} & 4.85 & 1.40$\times$ & 100K \\
    Diff-60\% \citep{diffpruning} & 7.90 & 2.33$\times$ & 100K \\
 \cmidrule(r){1-2}\cmidrule(r){3-3} \cmidrule(r){4-4}
    Diff-70\% \citep{diffpruning}&  9.89 & 2.57$\times$ & 200K \\
    Diff-60\% $\rightarrow$ Depth-86\% \citep{kim23efficient}& 9.09 & 2.42$\times$ & 200K \\
    Diff-60\% $\rightarrow$ LayerOnly-63\% (Ours) & 10.15 & \textbf{2.59}$\times$ & 200K \\
    Diff-60\% $\rightarrow$ LayerMerge-70\% (Ours)& \textbf{8.92} & \textbf{2.59}$\times$ & 200K \\
\bottomrule
\end{tabular}
\end{adjustbox}
\label{tab:diffprune}
\end{table}

In this section, we evaluate the performance of the different pruning methods on DDPM model \citep{ddpm} on the CIFAR10 dataset \citep{cifar10}.
We measure performance using the standard Frechet Inception Distance (FID) metric \citep{FID}. 
We report the last FID of the compressed model after fine-tuning, evaluated on the validation set, and its corresponding latency speedup.

\paragraph{DDPM}

\Cref{tab:ddpm} reports the different compression results on DDPM.
LayerMerge shows superior performance compared to the existing depth compression baseline and the layer pruning variant of our method.
Specifically, we achieve 1.08$\times$ speed-up with a lower FID metric compared to the depth compression baseline (comparing LayerMerge-73\% to Depth-89\%).
The channel pruning baseline Diff-Pruning shows superior performance than LayerMerge here (\Cref{tab:diffprune}).
This is likely due to DDPM having more channel redundancy than other models.

\paragraph{Channel pruned DDPM}
We note that channel pruning and depth compression methods can be jointly applied.
We thus include results where we apply our method to the channel pruned DDPM model obtained by Diff-Pruning in \Cref{tab:diffprune} \citep{diffpruning}.
Diff-$p$\% denotes the pruned model obtained with Diff-Pruning by removing $p$\% of the channels in each layer.
The results show that combining our method with Diff-Pruning achieves a larger speed-up than solely relying on Diff-Pruning, and our method consistently outperforms the depth compression and layer pruning baselines in this setting as well.


\subsection{Ablation studies}

\begin{table}[t]
\caption{Accuracy and corresponding latency speed-up compared to sequential optimization and our method evaluated with MobileNetV2-1.0 on ImageNet dataset.}
\vspace{0.5em}
\centering
\begin{adjustbox}{max width=1.0\columnwidth}
\begin{tabular}{lcccc}
\toprule
 && \normalsize{PyTorch}& \normalsize{TensorRT}\\
 Network     & Acc (\%) $\uparrow$& Speed-up $\uparrow$ & Speed-up $\uparrow$\\
 \cmidrule(r){1-2}\cmidrule(r){3-3} \cmidrule(r){4-4}
    MobileNetV2-1.0  & 72.89 & 1.00$\times$ & 1.00$\times$ \\
 \cmidrule(r){1-2}\cmidrule(r){3-3} \cmidrule(r){4-4}
    Depth-74\% $\rightarrow$ LayerOnly-72\% & 71.72 & 2.09$\times$ & 1.79$\times$\\
    LayerMerge-39\% (Ours)& \textbf{71.89}  & \textbf{2.15}$\times$ & \textbf{1.80}$\times$ \\
 \cmidrule(r){1-2}\cmidrule(r){3-3} \cmidrule(r){4-4}
    Depth-74\% $\rightarrow$ LayerOnly-64\% & 70.14 & 2.33$\times$ & 2.09$\times$\\
    LayerMerge-33\% (Ours)& \textbf{70.81}  & \textbf{2.47}$\times$ & \textbf{2.15}$\times$ \\
\bottomrule
\end{tabular}
\end{adjustbox}
\label{tab:abl}
\end{table}

Our method jointly optimizes the selection of the activation layers and convolution layers to prune.
An alternative way to do this is to sequentially optimize the selection of each type of layer independently.
We compare our method to sequentially applying {Depth} then {LayerOnly} on MobileNetV2 on the ImageNet dataset in \Cref{tab:abl}.
Our method outperforms the sequential optimization baseline, underlining the importance of our joint optimization approach.
We provide the details in \Cref{app:exp}.

\section{Related Work}

\paragraph{Unstructured pruning}
Unstructured pruning methods remove individual neurons in the network to achieve network sparsity \citep{unstructured,transposable,chita,sparsegpt}.
The closest method to ours in this line of work is \citet{spdy}, which proposes a dynamic programming algorithm that determines layer-wise sparsity under a given latency constraint.
However, unstructured pruning methods require specialized hardware to achieve computational savings.

\paragraph{Channel pruning}
In contrast, structured pruning methods, which consist of removing redundant regular regions of weights, can yield computational savings on off-the-shelf hardware.
Among such methods are channel pruning methods, which remove redundant channels in a convolutional neural network \citep{taylor16,taylor17,pfec}.
\citet{knapsack} formulate this as a knapsack problem that maximizes the sum of channel importance values under a given FLOPs budget. 
Similarly, \citet{halp} formulates another knapsack problem, which maximizes the sum of channel importance values under a latency constraint on a target device.

\paragraph{Layer pruning}
Layer pruning methods aim to make a shallower network by entirely removing certain convolution layers \citep{discriminative,shallowing, layerprune}. 
However, their aggressive nature in removing parameters results in a large performance degradation under high compression ratios \citep{depthshrinker}.

\paragraph{Depth compression}
Instead, depth compression methods focus on eliminating unimportant non-linear activation layers, then merging consecutive convolution layer to reduce the network's depth \citep{layerfolding,depthshrinker,kim23efficient}.
In particular, \citet{layerfolding} and \citet{depthshrinker} propose to train a soft parameter that controls the intensity of non-linearity of each layer with an additional loss that penalizes the absolute value of the soft parameter.
More recently, \citet{kim23efficient} propose to maximize the sum of the importance values of merged layers under a latency constraint, via a dynamic programming algorithm.
However, this line of work suffers from a fundamental drawback as merging layers leads to an increase in the kernel size of the merged layers. \citet{depthshrinker} sidestep this issue, as they only consider merging within inverted residual blocks.
However, this restriction not only limits the applicability of their method to mobile-optimized CNNs, but also limits its performance. Indeed, \citet{kim23efficient} have shown that their method outperforms that of \cite{depthshrinker}.



\section{Conclusion}
We propose \textit{LayerMerge}, a novel efficient depth compression method, which jointly prunes convolution layers and activation functions to achieve a desired target latency while minimizing the performance loss.
Our method avoids the problem of increasing kernel size in merged layers, which existing depth compression methods suffer from.
It consistently outperforms existing depth compression and layer pruning baselines in various settings.



\section*{Impact Statement}

\looseness=-1 This work contributes to the area of NN compression, in particular to compressing pre-trained CNNs and diffusion models. As such, it helps reduce the energy consumption and computational resources of such models at inference. Hence, this work contributes to reducing the environmental impact of NNs and enabling their use on resource-constrained devices like mobile phones and for latency-critical applications such as self-driving cars. On the other hand, pruning has been shown to have a disparate impact on performance between different sub-groups of data, which amplifies existing algorithmic bias \cite{hooker2020bias, paganini2020prune}. There is an ongoing effort to mitigate this negative impact of pruning either using fairness-aware pruning \cite{lin2022fairgrape} or by modifying the objective during fine-tuning of the pruned model \cite{tran2022pruning}. The latter approach can be applied to our pruning method \cite{hashemizadeh2023balancing}.

\section*{Acknowledgements}
This work was supported by Samsung Advanced Institute of Technology, Samsung Electronics Co., Ltd. (IO220810-01900-01), Institute of Information \& Communications Technology Planning \& Evaluation (IITP) grant funded by the Korea government (MSIT) [No. RS-2020-II200882, (SW STAR LAB) Development of deployable learning intelligence via self-sustainable and trustworthy machine learning and No. RS-2021-II211343, Artificial Intelligence Graduate School Program (Seoul National University)], and Basic Science Research Program through the National Research Foundation of Korea (NRF) funded by the Ministry of Education (RS-2023-00274280).
Hyun Oh Song is the corresponding author.


\bibliography{main}
\bibliographystyle{icml2024}

\newpage
\appendix
\onecolumn
\section{Implementation Details} \label{app:detail}
In this section, we provide the implementation details regarding the different types of layers and networks.

\paragraph{Normalization layers}
Both ResNet-34 and MobileNetV2 utilize a batch normalization layer for normalization \citep{batchnorm}.
At inference time, we fuse the batch normalization layer with the convolution layer, before merging the network.
On the other hand, DDPM employs a group normalization layer for normalization \citep{groupnorm}.
Unlike batch normalization, this layer cannot be fused with the convolution layer at inference time because it uses test-time feature map statistics.
To address this, before fine-tuning, we move any group normalization layer between successive convolution layers that will be merged, to after these convolution layers.
If there are multiple such normalization layers, we only keep the last one.
This adjustment allows us to merge the consecutive convolution layers at inference time.

\paragraph{Strided convolutions, depthwise convolutions, and padding}

For convolution layers that have a stride larger than 1, we restrict merging them when the kernel size of the following convolution layer is larger than 1, as done in \citet{kim23efficient}. We thus restrict the choice of $A$ to include the activation layers between such layers.
This is because merging the convolution layer with a stride larger than 1 significantly increases the merged kernel size.
Concretely, merging the consecutive convolution layers $f_{\theta_2} \circ f_{\theta_1}$ results in the merged kernel size of
\begin{equation*}
    \mathrm{Ker}(\theta_{\mathrm{merged}}) = (\mathrm{Ker}(\theta_2)-1) \times \mathrm{Str}(f_{\theta_1}) +  \mathrm{Ker}(\theta_1),
\end{equation*}
where $\mathrm{Str}(\cdot)$ denotes the stride of the convolution layer \citep{depthshrinker}.

It is worth mentioning that merging two depthwise convolution layers results in a single depthwise convolution layer. 
To account for this in networks that include depthwise convolution layers, we add a binary variable to the lookup table to indicate whether a layer is a depthwise convolution layer.
We integrate this variable into the dynamic programming algorithm when we implement our method.

As noted in \citet{kim23efficient}, consecutive convolution layers that will be merged should not have any padding in between to avoid discrepancies at the boundary of the output before and after merging.
To address this, before fine-tuning, we reorder the padding of the network to be applied prior to each such block of consecutive convolution layers.
as done in \citet{kim23efficient}.

\paragraph{Skip-connections}
There are two different types of skip-connection in a CNN: Skip-addition and skip-concatenation.
Skip-addition, employed by ResNet and MobileNetV2, 
adds the output of an earlier layer directly to the output of a later layer.
We can fuse skip-addition into the convolution layer if every intermediate convolution layer is merged into a single layer \citep{repvgg}.
We thus can merge across a skip-addition only if every convolution layer in between is merged into a single layer. 
Conversely, skip-concatenation, used in DDPM, concatenates the output of an earlier layer directly to the output of a later layer.
We do not merge layers across a skip-concatenation. 

\paragraph{MobileNetV2}
In MobileNetV2, it is notable that there is no non-linear activation layer following each Inverted Residual Block \citep{mobilenetv2}. 
Prior works on depth compression suggest that the performance of a compressed network can be improved by adding a non-linear activation layer after the merged layers \citep{depthshrinker,kim23efficient}. 
We also adapt this trick in our implementation on MobileNetV2 architecture.

\paragraph{DDPM}
The DDPM architecture uses an upsampling module to increase the spatial dimension of the feature map, and it further employs a self-attention layer at the $16 \times 16$ resolution between the residual blocks \citep{chen2018pixelsnail,ddpm}. 
We do not merge convolution layers across the self-attention layer or the upsampling layer in the DDPM architecture.
It is also worth mentioning that the DDPM architecture has a $3\times3$ convolution layer with stride $1$  between the upsampling layer and the skip concatenation.
We include these convolution layers as potential pruning candidates in our algorithm, as this leads to improved performance of the compressed network.
When measuring $\mathrm{Perf}(\cdot)$ using negative diffusion loss in DDPM, we further divide this value by the diffusion loss of the pre-trained network since it leads to more stable result.

\section{Proof} \label{app:proof}
In this section, we prove \Cref{prop:dp}, restated here for convenience.
\DPoptimal*
\begin{proof}
We prove this by induction.
Suppose that for $l < l_0$ and $t < t_0$, $A[l, t]$ and $k_i^{(lt)} \coloneqq C[l, t] \cap (a_{i-1}^{(lt)}, a_{i}^{(lt)}]$ are the optimal solution of Problem \eqref{eq:dp_subprob}, where $(a_{i}^{(lt)})_{i=1}^{|A[l,t]|}$ denotes the element of $A[l, t]$ in ascending order.

Assume that $A[l_0, t_0]$ and $k_i^{(l_0 t_0)}$ from \Cref{eq:dp_recur} is not the optimal solution of Problem \eqref{eq:dp_subprob} when $l=l_0, t=t_0$. 
We denote the optimal solution of Problem \eqref{eq:dp_subprob} for $l=l_0, t=t_0$ as $\widehat{A}$ and $\hat{k}_i$.

\begin{align} 
    \sum_{i=1}^{|\widehat{A}|+1} I\bigl[\hat{a}_{i-1}, \hat{a}_i, \hat{k}_i\bigr] &> \sum_{i=1}^{|A[l_0, t_0]|+1} I\bigl[a_{i-1}^{(l_0 t_0)}, a_{i}^{(l_0 t_0)}, k_i^{(l_0 t_0)}\bigr], \label{prf:eq1}\\
    \sum_{i=1}^{|\widehat{A}|+1} T\bigl[\hat{a}_{i-1}, \hat{a}_i, \hat{k}_i\bigr] \label{prf:eq2}&< t_0. 
\end{align}

We first show that $\widehat{A}$ is not empty because 
\begin{align*} 
    &\sum_{i=1}^{|\widehat{A}|+1} I\bigl[\hat{a}_{i-1}, \hat{a}_i, \hat{k}_i\bigr] \\&> \sum_{i=1}^{|A[l_0, t_0]|+1} I\bigl[a_{i-1}^{(l_0t_0)}, a_{i}^{(l_0t_0)}, k_i^{(l_0 t_0)}\bigr]= M[l_0, t_0] \tag{from the assumption \& definition of $A[l_0, t_0]$}\\
    &= \underset{0 \leq l' < l_0,\,k \in K_{l' l_0}}{\mathrm{max}} \left(M[l', t - T[l', l_0, k]] + I[l', l_0, k]\right) \tag{from the recurrence relation \Cref{eq:dp_recur}}\\
    &\geq \underset{k \in K_{0 l_0}}{\mathrm{max}} I[0, l_0, k].
\end{align*}

Now, let $\hat{l} \coloneqq \hat{a}_{|\widehat{A}|}$ be the maximum value of $\widehat{A}$.
Then, 
\begin{align}
    &\sum_{i=1}^{|\widehat{A}|+1} I\bigl[\hat{a}_{i-1}, \hat{a}_i, \hat{k}_i\bigr] \nonumber \\
    &= \sum_{i=1}^{|\widehat{A}|} I\bigl[\hat{a}_{i-1}, \hat{a}_{i}, \hat{k}_i\bigr] + I[\hat{l}, l_0, \hat{k}_{|\widehat{A}|+1}] \nonumber\\
    &\leq M[\hat{l}, t_0 - T[\hat{l}, l_0, \hat{k}_{|\widehat{A}|+1}]] + I[\hat{l}, l_0, \hat{k}_{|\widehat{A}|+1}] \tag{from the optimality assumption} \nonumber \\
    &\leq M[l_0, t_0] = \sum_{i=1}^{|A[l_0, t_0]|+1} I\bigl[a_{i-1}^{(l_0 t_0)}, a_{i}^{(l_0 t_0)}, k_i^{(l_0 t_0)}\bigr] \tag{from the recurrence relation \Cref{eq:dp_recur}}, 
\end{align}
which contradicts to \Cref{prf:eq1}.
Note that the inequality with from the optimality assumption holds because we assumed the optimality for $l=\hat{l}<l_0$ and $t=t_0-T[\hat{l}, l_0, \hat{k}_{|\widehat{A}|+1}] < t_0$,
and from \Cref{prf:eq2}, we have
$$ \sum_{i=1}^{|\widehat{A}|} T\bigl[\hat{a}_{i-1}, \hat{a}_i, \hat{k}_i\bigr] <  t_0-T[\hat{l}, l_0, \hat{k}_{|\widehat{A}|+1}],$$ which satisfies the constraint in Problem \eqref{eq:dp_subprob}.
Therefore, from the contradiction we have that $A[l_0, t_0]$ and $k_i^{(l_0 t_0)}$ are indeed optimal solutions of Problem \eqref{eq:dp_subprob} ($l=l_0$ and $t=t_0$).

For the base case $l=0$ and $t = \frac{T_0}{P}$, $A[l, t] = \emptyset$ is indeed the solution of Problem \eqref{eq:dp_subprob}.

Now, plugging in $l=L$ and $t=T_0$ proves the theorem.
\end{proof}

\section{Details on Constructing Importance and Latency Tables} \label{app:cost}

In this section, we provide details on how we measure the importance and latency values for the lookup tables $I$ and $T$, along with their corresponding practical computation cost for different types of networks.

\begin{table}[t]
\caption{Wall-clock time for constructing the importance and latency look-up tables. GPU hours for constructing the importance table is measured in RTX3090 and the latency table is measured in RTX2080 Ti.}
\vspace{0.5em}
\centering
\begin{adjustbox}{max width=1.0\columnwidth}
\begin{tabular}{lcccc}
\toprule
 & & Importance table & Latency table & \# of table \\
 Network    & Dataset & (GPU Hours) & (GPU Hours) & entries  \\
\cmidrule(r){1-1}\cmidrule(r){2-2}\cmidrule(r){3-3}\cmidrule(r){4-4}\cmidrule(r){5-5}
    ResNet-34        & ImageNet &  4.4 hours & 25.9 minutes & 150 \\
    MobileNetV2-1.0  & ImageNet & 13.2 hours &  6.2 minutes & 391 \\
    MobileNetV2-1.4  & ImageNet & 15.0 hours & 10.6 minutes & 391\\
    DDPM             & CIFAR10  &  2.5 hours &  1.3 minutes & 98\\    
\bottomrule
\end{tabular}
\end{adjustbox}
\label{tab:cost}
\end{table}

\begin{table}[t]
\caption{Wall-clock time for constructing the importance look-up table for ResNet-34 and MobileNetV2-1.0 in different methods. GPU hours for constructing the importance table is measured in RTX3090.}
\centering
\begin{subtable}{0.47\textwidth}

\caption{ResNet-34 on ImageNet dataset.}
\begin{adjustbox}{max width=\columnwidth}
\begin{tabular}{lcc}
\toprule
 & &  \# of table \\
 Method     & GPU Hours & entries  \\
\cmidrule(r){1-1}\cmidrule(r){2-2}\cmidrule(r){3-3}
    Depth \citep{kim23efficient} &  25.8 hours & 62 \\
    LayerOnly (Ours) & 0.8 hours & 29 \\
    LayerMerge (Ours) & 4.4 hours & 150 \\
\bottomrule
\end{tabular}
\end{adjustbox}
\end{subtable}
\hfill
\begin{subtable}{0.47\textwidth}
\caption{MobileNetV2-1.0 on ImageNet dataset.}
\begin{adjustbox}{max width=\columnwidth}
\begin{tabular}{lcc}
\toprule
 & & \# of table \\
 Method    & GPU Hours & entries  \\
\cmidrule(r){1-1}\cmidrule(r){2-2}\cmidrule(r){3-3}
    Depth \citep{kim23efficient} &  126.0 hours & 315 \\
    LayerOnly (Ours) & 0.4 hours & 13 \\
    LayerMerge (Ours) & 13.2 hours & 391 \\ 
\bottomrule
\end{tabular}
\end{adjustbox}
\end{subtable}
\label{tab:cost-cmp}
\end{table}

\paragraph{Importance measurements}

We defined importance values in \Cref{eq:imp}.
Recall that the first term is estimated with the performance of the network after fine-tuning for a few steps.
For that, we select a random subset of the training dataset for fine-tuning, then we evaluate performance on another separate subset, also drawn from the training dataset.
For the second term, we evaluate the performance of the pre-trained network on the separate subset.
In particular, we use a fine-tuning subset of size 4\% of the total training dataset size for ImageNet, and 1\% for CIFAR10. The separate subset is also the same size as the fine-tuning subset. 
We fine-tune the network for 1 epoch for ImageNet, and 50 epochs for CIFAR10 with the fine-tuning subset when we measure the importance.

We report the wall-clock time for constructing the importance look-up table in \Cref{tab:cost}.
It is worth mentioning that this computation can be done in an embarrassingly parallel fashion without any communication between GPUs, which allows for significant speedup if multiple GPUs are available. 
For instance, the importance table for MobileNetV2-1.0 only took 33 minutes with 24 GPUs.

In \Cref{tab:cost-cmp}, we further compare the wall-clock time required to construct the importance look-up tables used in LayerMerge and LayerOnly to the one used in Depth \citep{kim23efficient}. 
It is worth noting that the Depth baseline fine-tunes for one epoch over the full training dataset to evaluate each table entry.
However, we observe that fine-tuning for one epoch using only a small random subset of the training dataset is sufficient for estimating the importance values.
This significantly reduces the wall-clock time required to construct the look-up table in our method compared to the Depth baseline.

\paragraph{Latency measurements}
To measure each latency value in the lookup table $T$, we measure the inference time on PyTorch by first warming up the GPU for 300 forward passes, then averaging the inference time over the subsequent 200 forward passes, with a batch size of 128. Latency values are measured in milliseconds.
We report the wall-clock time for constructing the latency look-up table in \Cref{tab:cost}.

Recall that to solve the surrogate problem, we first need to discretize the latency values.
We do that by multiplying the real valued latencies in the lookup table and $T_0$ by $10$, then rounding them down to the nearest integer. 
Note that this is equivalent to choosing the discretization level as $P = 10 T_0$.

\section{Details on Ablation Studies} \label{app:exp}

In this section, we outline the details of the sequential optimization baseline (\emph{Depth $\rightarrow$ LayerOnly}) presented in \Cref{tab:abl}.
Recall that for MobileNetV2, we fine-tune every compressed network for 180 epochs, using the same fine-tuning recipe as \citet{kim23efficient}. 
For the sequential optimization baseline, we divide the fine-tuning epochs equally between the two pruning methods, i.e., 
we fine-tune for 90 epochs after each pruning method, again using the same fine-tuning recipe.

We fix the latency budget ratio $p\%$ for {Depth} to $74\%$, which yields a speed-up of 1.63$\times$. Then we use two different values of $p$ for {LayerOnly}:  $72\%$ and $64\%$. Note that $p$ here corresponds to $T_0/T_{\text{depth-pruned}}$, where $T_0$ is the chosen latency budget of the final pruned model and $T_{\text{depth-pruned}}$ is the latency of the model pruned using  {Depth-74\%}. 
It is worth noting that the allocations of fine-tuning epochs and compression ratios between the two pruning methods are hyperparameters that need to be tuned. 
However, our method is free from these hyperparameters due to the joint optimization on the two types of pruning modalities. 

\section{Additional Experiments}\label{app:AddExps}

\paragraph{Results with smaller fine-tuning epochs}
\looseness=-1 In this section, we study the effect of fine-tuning for a shorter time.
In particular, we present in \Cref{tab:small-mbv2} compression results on MobileNetV2-1.0 where we fine-tune all methods for 90, 30, and 20 epochs, using cosine learning rate decay. 
We further plot in \Cref{fig:recovery} the recovery curve of test accuracy across fine-tuning steps. 
Our method consistently outperforms baselines under these smaller fine-tuning budgets as well.

\begin{table}
\caption{Accuracy and latency speed-up of applying compression methods to MobileNetV2-1.0 on ImageNet dataset with 90, 30, and 20 fine-tuning epochs. The latency speed-up is measured on RTX2080 Ti GPU at batch size 128.}
\begin{subtable}{0.31\textwidth}

\caption{Results of fine-tuning for 90 epochs.}

\centering
\begin{adjustbox}{max width=1.0\columnwidth}
\begin{tabular}{lcccc}
\toprule
 && \normalsize{PyTorch}& \normalsize{TensorRT}\\
 Network     & Acc (\%) $\uparrow$& Speed-up $\uparrow$ & Speed-up $\uparrow$\\
 \cmidrule(r){1-2}\cmidrule(r){3-3} \cmidrule(r){4-4}
    MobileNetV2-1.0  & 72.89 & 1.00$\times$ & 1.00$\times$ \\
 \cmidrule(r){1-2}\cmidrule(r){3-3} \cmidrule(r){4-4}
    AMC-70\% \citep{amc} & 71.66 & 1.32$\times$ & 1.34$\times$ \\
    Depth-74\% \citep{kim23efficient} & 72.49 & 1.62$\times$ & \textbf{1.42}$\times$ \\
    LayerOnly-73\% (Ours) & 69.29 & 1.30$\times$ & 1.35$\times$\\
    LayerMerge-55\% (Ours)& \textbf{72.73}  & \textbf{1.63}$\times$ & \textbf{1.42}$\times$ \\
\bottomrule
\end{tabular}
\end{adjustbox}
\end{subtable}
\hfill
\begin{subtable}{0.31\textwidth}

\caption{Results of fine-tuning for 30 epochs.}
\centering
\begin{adjustbox}{max width=1.0\columnwidth}
\begin{tabular}{lcccc}
\toprule
 && \normalsize{PyTorch}& \normalsize{TensorRT}\\
 Network     & Acc (\%) $\uparrow$& Speed-up $\uparrow$ & Speed-up $\uparrow$\\
 \cmidrule(r){1-2}\cmidrule(r){3-3} \cmidrule(r){4-4}
    MobileNetV2-1.0  & 72.89 & 1.00$\times$ & 1.00$\times$ \\
 \cmidrule(r){1-2}\cmidrule(r){3-3} \cmidrule(r){4-4}
    AMC-70\% \citep{amc} & 71.05 & 1.32$\times$ & 1.34$\times$ \\
    Depth-74\% \citep{kim23efficient} & 71.59 & 1.62$\times$ & \textbf{1.42}$\times$ \\
    LayerOnly-73\% (Ours) & 67.60 & 1.30$\times$ & 1.35$\times$\\
    LayerMerge-55\% (Ours)& \textbf{72.06}  & \textbf{1.63}$\times$ & \textbf{1.42}$\times$ \\
\bottomrule
\end{tabular}
\end{adjustbox}
\end{subtable}
\hfill
\begin{subtable}{0.31\textwidth}

\caption{Results of fine-tuning for 20 epochs.}
\centering
\begin{adjustbox}{max width=1.0\columnwidth}
\begin{tabular}{lcccc}
\toprule
 && \normalsize{PyTorch}& \normalsize{TensorRT}\\
 Network     & Acc (\%) $\uparrow$& Speed-up $\uparrow$ & Speed-up $\uparrow$\\
 \cmidrule(r){1-2}\cmidrule(r){3-3} \cmidrule(r){4-4}
    MobileNetV2-1.0  & 72.89 & 1.00$\times$ & 1.00$\times$ \\
 \cmidrule(r){1-2}\cmidrule(r){3-3} \cmidrule(r){4-4}
    AMC-70\% \citep{amc} & 70.64 & 1.32$\times$ & 1.34$\times$ \\
    Depth-74\% \citep{kim23efficient} & 71.09 & 1.62$\times$ & \textbf{1.42}$\times$ \\
    LayerOnly-73\% (Ours) & 66.96 & 1.30$\times$ & 1.35$\times$\\
    LayerMerge-55\% (Ours)& \textbf{71.59}  & \textbf{1.63}$\times$ & \textbf{1.42}$\times$ \\
\bottomrule
\end{tabular}
\end{adjustbox}
\end{subtable}

\label{tab:small-mbv2}
\end{table}

\begin{figure}
\centering
\begin{tikzpicture}[define rgb/.code={\definecolor{mycolor}{RGB}{#1}},
                    rgb color/.style={define rgb={#1},mycolor}]

\definecolor{gr}{RGB}{60,160,100}
\definecolor{or}{RGB}{200,140,80}
\definecolor{bl}{RGB}{120,120,220}
\definecolor{yl}{RGB}{200,200,100}
\definecolor{pp}{RGB}{200,150,240}

\begin{groupplot}[
        group style={columns=1, horizontal sep=1.05cm, 
        vertical sep=0.0cm},
        ]

\nextgroupplot[
            width=0.61\columnwidth,
            height=5cm,
            every axis plot/.append style={thick},
            grid=major,
            scaled ticks = false,
            xlabel near ticks,
            ylabel near ticks,
            tick pos=left,
            tick label style={font=\scriptsize},
            xlabel shift=-0.1cm,         
            ylabel shift=-0.15cm,
            label style={font=\small},
            xlabel style={align=center},
            xlabel={Epochs},
            ylabel={Accuracy (\%) $\uparrow$},
            xmin=0,
            xmax=181,
            xtick={1, 45, 90,135, 180},
            ytick={70.0, 65.0, 60.0, 55.0, 50.0, 45.0, 42.5, 40.0},
            yticklabels={70.0, 65.0, 60.0, 55.0, 50.0, 2e-1, 1e-1, 0},
            ymin=40,
            ymax=74.0,
            legend cell align=left,
            legend style={at={(1,0)},anchor=south east, nodes={scale=0.8}},
            after end axis/.code={
         \draw (rel axis cs:0,0.22) +(-2mm,-1mm) -- +(2mm,1mm)
              ++(0pt,-\pgfkeysvalueof{/tikz/axis break gap})
              +(-2mm,-1mm) -- +(2mm,1mm)
              (rel axis cs:0,0.22) +(0mm,0mm) -- +(0mm,0mm)
              ++(0pt,-\pgfkeysvalueof{/tikz/axis break gap})
              +(-2mm,-1mm) -- +(2mm,1mm);
              \draw (rel axis cs:0,0.22) +(-2mm,0mm) -- +(2mm,2mm)
              ++(0pt,-\pgfkeysvalueof{/tikz/axis break gap})
              +(-2mm,-1mm) -- +(2mm,1mm)
              (rel axis cs:0,0.22) +(0mm,0mm) -- +(0mm,0mm)
              ++(0pt,-\pgfkeysvalueof{/tikz/axis break gap})
              +(-2mm,-1mm) -- +(2mm,1mm);
    }
    ]


\addplot[or, opacity=0.8, mark=*, mark size=0.3pt] table [y=channel, col sep=comma]{data/recovery_with_zero.csv};\addlegendentry{AMC (1.32$\times$, 72.01\%)}

\addplot[bl, opacity=0.8, mark=*, mark size=0.3pt] table [y=depth, col sep=comma]{data/recovery_with_zero.csv};\addlegendentry{Depth (1.62$\times$, 72.83\%)}

\addplot[gr, opacity=0.8, mark=*, mark size=0.3pt] table [y=layer, col sep=comma]{data/recovery_with_zero.csv};\addlegendentry{LayerOnly (1.30$\times$, 69.66\%)}

\addplot[red, opacity=0.8, mark=*, mark size=0.3pt] table [y=layer_merge, col sep=comma]{data/recovery_with_zero.csv};\addlegendentry{LayerMerge (\textbf{1.63$\times$, 72.99\%})}
\end{groupplot}
\end{tikzpicture}
    \caption{
    Test accuracy recovery curve of different compression methods across fine-tuning epochs. We indicate the associated speed-up and accuracy after fine-tuning in the parentheses. The inference time is measured on RTX2080 Ti GPU at batch size 128 in PyTorch format.}
    \label{fig:recovery}
    \vspace{-1em}
\end{figure}

\paragraph{Comparison with knowledge distillation}
In this section, we compare our method to the knowledge distillation method of \citet{hinton2015distilling}. 
For that, we use a smaller version of MobileNetV2 \citep{mobilenetv2} than the one used for the pre-trained network as the student network and train it for the same number of epochs we use for fine-tuning in our method (180 epochs).
We present the results in \Cref{tab:kd1-mbv2} and plot the recovery curve of test accuracy across fine-tuning steps in \Cref{fig:kd-recovery}.
The key benefit of pruning methods like ours over knowledge distillation is that they only require fine-tuning the model, while knowledge distillation requires training the small model from scratch. 
This provides an advantage when both methods are compared under an identical training budget. 
Indeed our results show that our method outperforms the knowledge distillation method in this setting.

\begin{table}[t]
\caption{Accuracy and latency speed-up comparison between knowledge distillation \citep{hinton2015distilling} and our method with MobileNetV2-1.0 and MobileNetV2-1.4 on ImageNet dataset. The latency speed-up is measured on RTX2080 Ti GPU at batch size 128.}
\begin{subtable}{0.47\textwidth}

\caption{MobileNetV2-1.0 on ImageNet dataset.}

\centering
\begin{adjustbox}{max width=1.0\columnwidth}
\begin{tabular}{lcccc}
\toprule
 && \normalsize{PyTorch}& \normalsize{TensorRT}\\
 Network     & Acc (\%) $\uparrow$& Speed-up $\uparrow$ & Speed-up $\uparrow$\\
 \cmidrule(r){1-2}\cmidrule(r){3-3} \cmidrule(r){4-4}
    MobileNetV2-1.0  & 72.89 & 1.00$\times$ & 1.00$\times$ \\
 \cmidrule(r){1-2}\cmidrule(r){3-3} \cmidrule(r){4-4}
    Knowledge distillation (MobileNetV2-0.75) & 69.69 & 1.17$\times$ & 1.20$\times$ \\
    LayerMerge-55\% (Ours)& \textbf{72.99}  & \textbf{1.63}$\times$ & \textbf{1.42}$\times$ \\
\bottomrule
\end{tabular}
\end{adjustbox}
\end{subtable}
\hfill
\begin{subtable}{0.47\textwidth}

\caption{MobileNetV2-1.4 on ImageNet dataset.}
\centering
\begin{adjustbox}{max width=1.0\columnwidth}
\begin{tabular}{lcccc}
\toprule
 && \normalsize{PyTorch}& \normalsize{TensorRT}\\
 Network     & Acc (\%) $\uparrow$& Speed-up $\uparrow$ & Speed-up $\uparrow$\\
 \cmidrule(r){1-2}\cmidrule(r){3-3} \cmidrule(r){4-4}
    MobileNetV2-1.4  & 76.28 & 1.00$\times$ & 1.00$\times$ \\
 \cmidrule(r){1-2}\cmidrule(r){3-3} \cmidrule(r){4-4}
    Knowledge distillation (MobileNetV2-1.0) & 72.30 & 1.51$\times$ & 1.54$\times$ \\
    LayerMerge-43\% (Ours)& \textbf{74.91}  & \textbf{1.99}$\times$ & \textbf{1.61}$\times$ \\
\bottomrule
\end{tabular}
\end{adjustbox}
\end{subtable}
\vspace{1em}
\label{tab:kd1-mbv2}
\end{table}

\begin{figure}[t]
\centering
\begin{tikzpicture}[define rgb/.code={\definecolor{mycolor}{RGB}{#1}},
                    rgb color/.style={define rgb={#1},mycolor}]

\definecolor{gr}{RGB}{60,160,100}
\definecolor{or}{RGB}{200,140,80}
\definecolor{bl}{RGB}{120,120,220}
\definecolor{yl}{RGB}{200,200,100}
\definecolor{pp}{RGB}{200,150,240}

\begin{groupplot}[
        group style={columns=1, horizontal sep=1.05cm, 
        vertical sep=0.0cm},
        ]

\nextgroupplot[
            width=0.61\columnwidth,
            height=5cm,
            every axis plot/.append style={thick},
            grid=major,
            scaled ticks = false,
            xlabel near ticks,
            ylabel near ticks,
            tick pos=left,
            tick label style={font=\scriptsize},
            xlabel shift=-0.1cm,         
            ylabel shift=-0.15cm,
            label style={font=\small},
            xlabel style={align=center},
            xlabel={Epochs},
            ylabel={Accuracy (\%) $\uparrow$},
            xmin=0,
            xmax=181,
            xtick={1, 45, 90,135, 180},
            ytick={70.0, 60.0, 50.0, 40.0, 30.0, 20.0, 10.0},
            ymin=50,
            ymax=74.0,
            legend cell align=left,
            legend style={at={(1,0)},anchor=south east, nodes={scale=0.8}}]


\addplot[or, opacity=0.8, mark=*, mark size=0.3pt] table [y=kd, col sep=comma]{data/recovery_kd.csv};\addlegendentry{Knowledge distillation (1.17$\times$, 69.69\%)}



\addplot[red, opacity=0.8, mark=*, mark size=0.3pt] table [y=layer_merge, col sep=comma]{data/recovery_kd.csv};\addlegendentry{LayerMerge (\textbf{1.63$\times$, 72.99\%})}
\end{groupplot}
\end{tikzpicture}
    \caption{
    Test accuracy recovery curve of our method compared to knowledge distillation across fine-tuning epochs for MobileNetV2-1.0. We indicate the associated speed-up and the accuracy after fine-tuning in the parentheses. The inference time on RTX2080 Ti GPU at batch size 128 in PyTorch format.}
    \label{fig:kd-recovery}
    \vspace{-1em}
\end{figure}

\paragraph{Applying knowledge distillation during fine-tuning}

It is worth noting that knowledge distillation methods can be jointly applied with pruning methods, by considering the pre-trained network as the teacher network and the pruned network as the student network to train. 
We present in \Cref{tab:kd2-mbv2-1.0} the results of applying the knowledge distillation method of \citet{hinton2015distilling} to different pruning methods. We observe that applying knowledge distillation further improves the accuracy of the pruned network (compared to \Cref{tab:mbv2-1.0}), and our method consistently outperforms the baselines in this setting as well.

\begin{table}[t]
\caption{Accuracy and latency speed-up of applying knowledge distillation \citep{hinton2015distilling} with compression methods to MobileNetV2-1.0 on ImageNet dataset. The latency speed-up is measured on RTX2080 Ti GPU at batch size 128.}
\vspace{0.5em}
\centering
\begin{adjustbox}{max width=0.5\columnwidth}
\begin{tabular}{lcccc}
\toprule
 && \normalsize{PyTorch}& \normalsize{TensorRT}\\
 Network     & Acc (\%) $\uparrow$& Speed-up $\uparrow$ & Speed-up $\uparrow$\\
 \cmidrule(r){1-2}\cmidrule(r){3-3} \cmidrule(r){4-4}
    MobileNetV2-1.0  & 72.89 & 1.00$\times$ & 1.00$\times$ \\
 \cmidrule(r){1-2}\cmidrule(r){3-3} \cmidrule(r){4-4}
    AMC-70\% \citep{amc} & 72.04 & 1.32$\times$ & 1.34$\times$ \\
    Depth-74\% \citep{kim23efficient} & 72.99 & 1.62$\times$ & \textbf{1.42}$\times$ \\
    LayerOnly-73\% (Ours) & 69.70 & 1.30$\times$ & 1.35$\times$\\
    LayerMerge-55\% (Ours)& \textbf{73.14}  & \textbf{1.63}$\times$ & \textbf{1.42}$\times$ \\
 \cmidrule(r){1-2}\cmidrule(r){3-3} \cmidrule(r){4-4}
    Depth-66\% \citep{kim23efficient} & 72.31 & 1.88$\times$ & 1.57$\times$ \\
    LayerMerge-46\% (Ours) & \textbf{72.56} & \textbf{1.90}$\times$ & \textbf{1.65}$\times$ \\
 \cmidrule(r){1-2}\cmidrule(r){3-3} \cmidrule(r){4-4}
    Depth-59\% \citep{kim23efficient} & 71.76 & 2.07$\times$ & 1.79$\times$ \\
    LayerMerge-38\% (Ours) & \textbf{72.06} & \textbf{2.18}$\times$ & \textbf{1.84}$\times$ \\
 \cmidrule(r){1-2}\cmidrule(r){3-3} \cmidrule(r){4-4}
    Depth-53\% \citep{kim23efficient} & 70.81 & 2.47$\times$ & 1.97$\times$ \\
    LayerMerge-33\% (Ours) & \textbf{71.32} & \textbf{2.49}$\times$ & \textbf{2.05}$\times$ \\
\bottomrule
\end{tabular}
\end{adjustbox}
\label{tab:kd2-mbv2-1.0}
\end{table}

\input{figures/pareto-row3-col2}

\paragraph{Additional compression results}

In this section, we present additional results comparing various compression methods across different network architectures (ResNet-34, MobileNetV2-1.0, MobileNetV2-1.4, and DDPM) and compression ratios. 
We display the Pareto curves for each method on the different architectures in \Cref{fig:pareto}.
We report the latency speed-up measured in PyTorch format.
For the ResNet and MobileNetV2 architectures, we plot accuracy against speed-up, and for the DDPM architecture, we plot the FID metric against the latency speed-up.

\end{document}